\def\RR{\ensuremath{\mathbb{R}}}
\newcommand{\PP}{{\mathbb P}}
\newcommand{\rank}{\textup{rank}}
\newtheorem{theorem}{Theorem}
\numberwithin{theorem}{section} 
\numberwithin{equation}{section} 
\newtheorem{definition}[theorem]{Definition}
\newtheorem{lemma}[theorem]{Lemma}
\newcommand{\R}{{\mathbb{R}}}
\newcommand{\dis}{\displaystyle}
\newcommand{\wh}{\widehat}
\newsavebox{\smlmat}
\savebox{\smlmat}{$\left(\begin{smallmatrix}1 & 1 & 1 \\ 0 & 1 & 1 \\ 1 & 3 & 3\end{smallmatrix}\right)$}
\title{On the existence of a projective reconstruction}
\author{Hon Leung Lee}
\address{Hon Leung Lee, Mathematics, University of Washington, Seattle, WA 98195}
\email{hllee@uw.edu}
\begin{document}

\maketitle

\begin{abstract}
In this note we 
study the connection between the existence of a projective reconstruction and the existence of a fundamental matrix satisfying the epipolar constraints. 
\end{abstract}

\section{Introduction} 
Let a set of point correspondences $(x_i,y_i)\in \RR^2 \times \RR^2$ ($i=1,\ldots,m$) be given.
Consider the following three statements:
\begin{enumerate}
\item[(A)]
$(x_i,y_i)$
are the images of $m$ points in $\R^3$ in two uncalibrated cameras.
\item[(B)]
$(x_i,y_i)$
are the images of $m$ points in $\PP^3$ in two uncalibrated cameras.
\item[(C)]
There exists 
a fundamental matrix $F$ such that the $(x_i,y_i)$ satisfy the epipolar constraints.
\end{enumerate}
An in-depth study of (C) can be found in \cite{epipolarPaper_v2}.
The goal of this note is to understand the connection among these three statements. 
In the following we summarize our contribution. All the results are proved using just linear algebra.
\begin{enumerate}
\item
A standard result in two-view geometry \cite[\S 9.2]{hartley-zisserman-2003}  states that 
(A) implies (C). 
In \cite{hartley-zisserman-2003} this result was proved by classical projective geometry and drawing pictures. We offer  a modern, more rigorous, and  linear algebraic proof; see Theorem \ref{thm:fromSceneToEpipolar}.

\item
It is clear that (A) implies (B). We will show (A) and (B) are indeed equivalent; see Theorem \ref{thm:equivReconstruction}. 
The proof is based on constructing an appropriate projective transformation.

\item
We show that (C) implies (A) after making an additional assumption about the point pairs $(x_i,y_i)$.
Indeed, if (C) holds,  one can construct a pair of uncalibrated cameras $P_1,P_2$ associated to $F$. 
If we assume that $x_i$ is an epipole of $P_1$ if and only if $y_i$ is an epipole of $P_2$, then (A) holds. 
This assumption is also necessary for (A) to hold. As a result, we know (A) holds if and only if (C) and this assumption hold. This is the main theorem of this note;  see Theorem \ref{thm:equivalence}.

\end{enumerate}

In Section \ref{s2} we introduce the notation and definitions that will be used. 
In Section \ref{s3} we discuss projective reconstruction using finite, infinite, coincident and non-coincident cameras.
Finally we provide a proof of the main theorem using linear algebra, in Section \ref{s4}.

\section{Notation and definitions} \label{s2}

To begin with, we introduce the notation and definitions that will be used in this note; see
\cite{hartley-zisserman-2003}.

Denote the $n$-dimensional real projective space by $\PP^n$. For any $x,y\in \PP^n$, we say $x\sim y$ if there exists $\lambda\in \RR\setminus\{0\}$ such that $x=\lambda y$. 
The set of $m\times n$ matrices with entries in $\RR$ is denoted by $\RR^{m \times n}$, and by 
$\PP^{m\times n}$ if the matrices are only up to scale.
For $v \in \RR^3$,
$$
[v]_\times \,\,:= \,\,
\begin{pmatrix}
0 & -v_3 & v_2\\
v_3 & 0 & -v_1\\
-v_2 & v_1 & 0
\end{pmatrix}
$$
is a skew-symmetric matrix whose rank is two unless $v=0$. Also, $[v]_\times w = v \times w$, where $\times$ denotes the vector cross product.
For any $x\in \RR^n$ the symbol $\wh{x}$ denotes $(x,1)^\top$ in $ \PP^n$. 
A point in $\PP^n$ is called {\em finite} if it can be identified with $(x,1)^\top$ for some $x\in \RR^n$.

A {\em (projective) camera} can be modeled by a matrix $P \in \PP^{3 \times 4}$ with $\rank(P)=3$. Partitioning a camera as $P=\begin{pmatrix} A & b \end{pmatrix}$ where $A \in
\RR^{3 \times 3}$, we say that $P$ is a {\em finite camera} if $A$ is
nonsingular.  The camera center of $P$ is $(-A^{-1}b,1)^\top$ if $P$ is finite; and $(w,0)^\top$ otherwise, where $w$ lies in the kernel of $A$.
Two cameras $P_1,P_2$ with camera centers $c_1,c_2$ are {\em coincident} if $c_1 \sim c_2$.
A tuple $(P_1,P_2,\{w_i\}_{i=1}^m)$ is called a {\em (projective) reconstruction} of $\{(x_i,y_i)\}_{i=1}^m\subseteq \RR^2 \times \RR^2$ if 
$P_1$ and $P_2$ are projective cameras, $w_i\in \PP^3$ and  
$$
P_1 w_i \sim \wh{x}_i, \ P_2 w_i \sim \wh{y}_i  \   \text{ for all }i = 1, \ldots,m.
$$
If in addition, $P_1,P_2$ are finite cameras and $w_i$ are finite points for all $i$, then $(P_1,P_2,\{w_i\})$ is called a {\em finite (projective) reconstruction}.

A real $3\times 3$ matrix $F$ is a {\em fundamental matrix} associated to $\{(x_i,y_i)\}$ if $F$ has rank two  and 
the following {\em epipolar constraints} hold:
\begin{equation*}
\wh{y}_i^\top F \wh{x}_i = 0 \ \text{ for any }i.
\end{equation*}

\section{Projective reconstruction} \label{s3}
\label{sec:PR}
Given point correspondences $\{ (x_i, y_i) \in \RR^2 \times \RR^2, \,\,i=1,\ldots,m \}$, the {\em projective reconstruction problem} is to decide if 
there is a projective reconstruction of these point pairs, and the {\em finite projective reconstruction problem} is to determine if the pairs admit a 
finite projective reconstruction. We first show that these two problems, as well as two others that naturally interpolate between them, 
are all equivalent. 

\begin{theorem}\label{thm:equivReconstruction}
Let $\{ (x_i,y_i) \in \RR^2 \times \RR^2, \,\,i = 1,\ldots,m \}$ be given. 
Then the  following statements are equivalent:
\begin{enumerate}

\item \label{P3Scene3}
There are cameras $P_1,P_2$ and points 
$w_i \in \PP^3, \,\,i = 1, \ldots, m$, such that $(P_1,P_2,\{w_i\})$ is a reconstruction of 
$\{(x_i,y_i)\}$.

\item \label{finiteCam3}
There are {\sc finite} cameras  $P_1,P_2$ and points 
$w_i \in \PP^3, \,\, i=1,\ldots,m$, such that $(P_1,P_2,\{w_i\})$ is a reconstruction of 
$\{(x_i,y_i)\}$.

\item \label{finiteScene3}
There are {\sc finite} cameras $P_1,P_2$ and {\sc finite} points 
$w_i \in \PP^3, \,\,i=1,\ldots,m$, such that $(P_1,P_2,\{w_i\})$ is a reconstruction of 
$\{(x_i,y_i)\}$.

\item \label{3dScene3}
There is a {\sc finite} camera $P_2$  and {\sc finite} points 
$w_i \in \PP^3, \,\,i=1,\ldots,m$, such that $(P_1,P_2,\{w_i\})$ is a reconstruction of 
$\{(x_i,y_i)\}$, with the first camera $P_1 :=\begin{pmatrix} I & 0 \end{pmatrix}$ where $I$ is the $3 \times 3$ identity matrix.

\end{enumerate}
\end{theorem}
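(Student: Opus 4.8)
The plan is to dispose of the immediate implications $(\ref{3dScene3})\Rightarrow(\ref{finiteScene3})\Rightarrow(\ref{finiteCam3})\Rightarrow(\ref{P3Scene3})$ and then to prove the one substantive implication $(\ref{P3Scene3})\Rightarrow(\ref{3dScene3})$ by acting on an arbitrary reconstruction with a well-chosen invertible $4\times 4$ matrix. The easy direction is definitional: a finite camera is a camera and a finite point is a point of $\PP^3$, giving $(\ref{finiteScene3})\Rightarrow(\ref{finiteCam3})\Rightarrow(\ref{P3Scene3})$, while $\begin{pmatrix}I&0\end{pmatrix}$ has nonsingular left block and so is finite, giving $(\ref{3dScene3})\Rightarrow(\ref{finiteScene3})$. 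Everything below concerns the remaining implication.

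The underlying mechanism is that for invertible $H\in\RR^{4\times 4}$ the tuple $(P_1H^{-1},P_2H^{-1},\{Hw_i\})$ is again a reconstruction of $\{(x_i,y_i)\}$: indeed $(P_jH^{-1})(Hw_i)=P_jw_i$, right multiplication by $H^{-1}$ preserves rank, and $Hw_i\neq 0$. So it is enough to produce a single $H$ with $P_1H^{-1}=\begin{pmatrix}I&0\end{pmatrix}$, with $P_2H^{-1}$ finite, and with every $Hw_i$ finite. I would construct $H$ in two stages. In the first stage, since $\rank(P_1)=3$ the three rows of $P_1$ are independent in $\RR^4$; completing them to a basis by a row $n^\top$ gives an invertible $H_1=\begin{pmatrix}P_1\\ n^\top\end{pmatrix}$ with $P_1H_1^{-1}=\begin{pmatrix}I&0\end{pmatrix}$. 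Replacing the reconstruction by its image under $H_1$, we may assume $P_1=\begin{pmatrix}I&0\end{pmatrix}$; then, writing $w_i=(u_i,t_i)^\top$ with $u_i\in\RR^3$ and $t_i\in\RR$, the relation $u_i=P_1w_i\sim\wh{x}_i\neq 0$ forces $u_i\neq 0$ for every $i$.

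In the second stage I look for a correction that preserves this normalization, of the form $H_2^{-1}=\begin{pmatrix}I&0\\ c^\top&1\end{pmatrix}$ with $c\in\RR^3$ to be chosen; then $H_2=\begin{pmatrix}I&0\\ -c^\top&1\end{pmatrix}$, one has $P_1H_2^{-1}=\begin{pmatrix}I&0\end{pmatrix}$, $H_2w_i=(u_i,\ t_i-c^\top u_i)^\top$, and, writing $P_2=\begin{pmatrix}A_2&b_2\end{pmatrix}$, $P_2H_2^{-1}=\begin{pmatrix}A_2+b_2c^\top & b_2\end{pmatrix}$. Thus $H_2w_i$ is finite exactly when $c^\top u_i\neq t_i$, which for each $i$ removes from $\RR^3$ a single affine hyperplane (genuine because $u_i\neq 0$); and $P_2H_2^{-1}$ is finite exactly when $A_2+b_2c^\top$ is nonsingular, and I claim this too fails only on a hyperplane. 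If $A_2$ is nonsingular, the matrix determinant lemma gives $\det(A_2+b_2c^\top)=\det(A_2)\,(1+c^\top A_2^{-1}b_2)$, which vanishes on at most the hyperplane $c^\top A_2^{-1}b_2=-1$. If $A_2$ is singular then $\rank(A_2)=2$ and $b_2\notin\mathrm{im}(A_2)$ (rank $\le 1$, or $b_2\in\mathrm{im}(A_2)$, would each force $\rank(P_2)\le 2$), so $\mathrm{im}(A_2)$ and $b_2$ together span $\RR^3$; and whenever $c$ is not orthogonal to $\ker(A_2)$ one checks the subspace $\mathrm{im}(A_2+b_2c^\top)$ contains both $b_2$ and $\mathrm{im}(A_2)$, hence equals $\RR^3$. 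Since $\RR^3$ is not a union of finitely many proper affine subspaces, a $c$ missing all $m+1$ bad hyperplanes exists, and $(\begin{pmatrix}I&0\end{pmatrix},\,P_2H_2^{-1},\,\{H_2w_i\})$ then witnesses (\ref{3dScene3}).

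I expect the only real obstacle to be the last claim in the second stage — that imposing finiteness of $P_2H_2^{-1}$ still costs only a hyperplane's worth of parameters $c$ — and within it the case of singular $A_2$, where the hypothesis $\rank(P_2)=3$ is precisely what is needed. The remaining ingredients are routine: the group action on reconstructions, the reduction of $P_1$ to $\begin{pmatrix}I&0\end{pmatrix}$, and the elementary fact that a real vector space is not covered by finitely many proper affine subspaces.
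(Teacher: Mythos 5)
Your proof is correct, and its skeleton is the same as the paper's: dispose of the trivial implications, normalize the first camera to $\begin{pmatrix} I & 0 \end{pmatrix}$ by completing its rows to a basis, and then act by a shear homography $\begin{pmatrix} I & 0 \\ \ast & 1 \end{pmatrix}$ whose parameter is chosen generically to avoid finitely many bad hyperplanes (the paper's Lemma \ref{lem:avoidence} is exactly your ``$\RR^n$ is not a finite union of proper (affine) subspaces,'' stated homogeneously one dimension up). The one genuine point of divergence is how finiteness of the second camera is secured. The paper adds a single extra vector to the avoidance list, namely the center $c_2$ of $P_2$: since the center of $P_2H^{-1}$ is $Hc_2$, the camera $P_2H^{-1}$ is finite precisely when $Hc_2$ has nonzero last coordinate, so camera finiteness and point finiteness are handled by one uniform application of the avoidance lemma, with no case analysis. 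You instead show directly that $\{c : \det(A_2+b_2c^\top)=0\}$ is contained in a hyperplane, via the matrix determinant lemma when $A_2$ is invertible and, when $A_2$ is singular, via the observation that $\rank(P_2)=3$ forces $\rank(A_2)=2$ and $b_2\notin\mathrm{im}(A_2)$, so that any $c$ not orthogonal to $\ker(A_2)$ makes $\mathrm{im}(A_2+b_2c^\top)=\RR^3$; both steps check out. Your route costs a short case analysis but is self-contained at that point, whereas the paper's center argument is slicker once one grants that a camera is finite exactly when its (transformed) center is a finite point; you also treat the scene points uniformly as $w_i=(u_i,t_i)^\top$, where the paper separates points at infinity from finite points via the index set $\sigma$ in Lemma \ref{lem:infToFin}, a purely cosmetic difference.
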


If $P$ is a camera matrix, there is a nonsingular matrix $H \in \RR^{4 \times 4}$ such that $PH^{-1} = \begin{pmatrix} I & 0 \end{pmatrix}$. For instance, take $H$ to be the nonsingular $4 \times 4$ matrix obtained by adding an appropriately chosen additional row to $P$. 
In order to prove Theorem~\ref{thm:equivReconstruction}, we will first need the following simple fact 
that for any finite collection of nonzero points in $\RR^n$, there is always a hyperplane through the origin that avoids all of them.

\begin{lemma} \label{lem:avoidence}
Given $v_1,\ldots,v_m\in \RR^n \setminus\{0\}$, there exists $a\in \RR^n$ such that 
$a^\top v_i \neq 0$ for all $i$. 
\end{lemma}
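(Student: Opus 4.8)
The plan is to show that the finitely many hyperplanes $H_i := \{a \in \RR^n : a^\top v_i = 0\}$ cannot cover $\RR^n$, and to do this by an elementary induction on $m$ that stays entirely within linear algebra. For the base case $m = 1$, since $v_1 \neq 0$ some coordinate of $v_1$ is nonzero, so the corresponding standard basis vector $e_j$ satisfies $e_j^\top v_1 \neq 0$.

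For the inductive step, I would assume $a \in \RR^n$ satisfies $a^\top v_i \neq 0$ for $i = 1, \ldots, m-1$; if also $a^\top v_m \neq 0$ we are done, so suppose $a^\top v_m = 0$. Choose some $b \in \RR^n$ with $b^\top v_m \neq 0$ (possible as in the base case) and consider the one-parameter family $a_t := a + t b$ for $t \in \RR$. For each fixed $i$, the map $t \mapsto a_t^\top v_i = a^\top v_i + t\, b^\top v_i$ is an affine function of $t$, and I would check it has at most one root: for $i < m$ it is either nonconstant, hence has exactly one root, or it is the constant $a^\top v_i$, which is nonzero by the inductive hypothesis and so has no root; for $i = m$ it equals $t\, b^\top v_i$ with $b^\top v_i \neq 0$, which vanishes only at $t = 0$. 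Hence $\bigcup_{i=1}^m \{ t \in \RR : a_t^\top v_i = 0 \}$ is a finite set, and since $\RR$ is infinite we may pick $t$ outside it; then $a_t$ is the desired vector.

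As an alternative — arguably cleaner — one can note that $p(a) := \prod_{i=1}^m (a^\top v_i)$ is a product of nonzero linear forms in the entries of $a$, hence a nonzero polynomial, so over the infinite field $\RR$ it cannot vanish identically, and any $a$ with $p(a) \neq 0$ works. I do not anticipate a genuine obstacle here; the only point warranting care is the finiteness claim — equivalently, that a finite union of proper hyperplanes is a proper subset of $\RR^n$ — since it is precisely this that excludes the degenerate possibility of the $H_i$ covering the whole space.
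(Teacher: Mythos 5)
Your proposal is correct, but it takes a different route from the paper. The paper argues by contradiction: if every $a$ lay in some $v_i^\perp$, then $\RR^n = v_1^\perp \cup \cdots \cup v_m^\perp$, and it then invokes (without proof) the fact that $\RR^n$ cannot be a finite union of proper hyperplanes to conclude $\RR^n = v_i^\perp$ for some $i$, forcing $v_i = 0$, a contradiction. You instead prove the statement directly by induction on $m$, perturbing a vector $a$ that works for $v_1,\ldots,v_{m-1}$ along a line $a + tb$ and noting that each constraint $a_t^\top v_i = 0$ excludes at most one value of $t$; your polynomial alternative ($\prod_i (a^\top v_i)$ is a nonzero polynomial over an infinite field) is a second clean packaging of the same idea. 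The trade-off: the paper's argument is shorter but leans on the covering fact as a known black box, whereas your induction (or the polynomial argument) actually establishes that fact from scratch, so your write-up is more self-contained --- indeed, as you observe, the finiteness of the bad set of $t$ values is exactly the point the paper's proof takes for granted. Both arguments are valid and elementary, and both stay within linear algebra as the paper intends.
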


\begin{proof}
Let $S:= \{v_1,\ldots,v_m\}$. We want to show that there exists  $a\in \RR^n$ such that 
$a^\perp \cap S =  \emptyset$.
Suppose to the contrary, for any $a\in \RR^n$ one has $a^\perp \cap S \neq \emptyset$. 
Then $a\in v_i^\perp$ for some $i$. Thus $\RR^n = v_1^\perp \cup \cdots \cup v_m^\perp$ which implies that $\RR^n = v_i^\perp$ for some $i$, and hence,
this $v_i=0$. This contradicts our assumption.
\end{proof}

We now come to the key ingredient in the proof of Theorem~\ref{thm:equivReconstruction} which allows us to always 
replace a projective reconstruction with a finite projective reconstruction whenever the first camera is of the form $\begin{pmatrix} I & 0 \end{pmatrix}$.

\begin{lemma} \label{lem:infToFin} Given point pairs $ \{ (x_i,y_i) 
\in \RR^2\times \RR^2, \,\,i = 1,\ldots,m \}$, suppose we have cameras $P_1 = \begin{pmatrix} I & 0 \end{pmatrix}$ and $P_2 = \begin{pmatrix} A & b \end{pmatrix}$, a set  $\sigma\subseteq \{1,\ldots,m\}$,  and points $v_i\in \RR^3$, $i = 1,\ldots,m$ such that:
\begin{center}
$\begin{array} {lllcl}\label{eq:P1Inf} 
\forall \,\,i \in \sigma,  &  \,\,\,\, v_i \neq 0, &  \,\,\,\,P_1 \begin{pmatrix} v_i \\ 0 \end{pmatrix}  \sim \wh{x}_i & \textup{ and } & 
P_2 \begin{pmatrix} v_i \\ 0 \end{pmatrix}  \sim \wh{y}_i;  \\
&&&\\
\forall \,\,i \not \in \sigma,  &  &  \,\,\,\,P_1 \wh{v}_i  \sim \wh{x}_i  & \textup{ and } & P_2 \wh{v}_i \sim \wh{y}_i.
\end{array}$
\end{center}

Then there exists a finite camera $P_2'$ and points $v_i' \in \RR^3$, $i=1,\ldots,m$ such that 
$(P_1,P_2', \{\wh{v}_i'\})$   is a finite reconstruction of $\{(x_i,y_i)\}$  .
In addition, if $b\neq 0$, then $P_1$ and $P_2'$ are  non-coincident cameras.
\end{lemma}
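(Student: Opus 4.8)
The plan is to produce a suitable $a\in\RR^3$ and set $H:=\left(\begin{smallmatrix} I & 0 \\ a^\top & 1\end{smallmatrix}\right)$, a nonsingular $4\times 4$ matrix with inverse $\left(\begin{smallmatrix} I & 0 \\ -a^\top & 1\end{smallmatrix}\right)$, and then take $P_2':=P_2H^{-1}=\begin{pmatrix}A-ba^\top & b\end{pmatrix}$ together with the scene points obtained by applying $H$ to the given homogeneous coordinates. The point of this $H$ is that $P_1H=\begin{pmatrix}I&0\end{pmatrix}=P_1$, so that for any choice of $a$ the projection relations are automatically preserved: writing $u_i:=\begin{pmatrix}v_i\\0\end{pmatrix}$ for $i\in\sigma$ and $u_i:=\wh v_i$ for $i\notin\sigma$, we have $P_1(Hu_i)=P_1u_i\sim\wh x_i$ and $P_2'(Hu_i)=P_2H^{-1}Hu_i=P_2u_i\sim\wh y_i$. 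So everything reduces to choosing $a$ so that (i) each $Hu_i$ represents a finite point and (ii) $P_2'$ is a finite camera.

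Computing the images, $Hu_i=\begin{pmatrix}v_i\\a^\top v_i\end{pmatrix}$ for $i\in\sigma$, which is a finite point precisely when $a^\top v_i\neq 0$, and then equals (up to scale) $\wh v_i'$ with $v_i':=v_i/(a^\top v_i)$; similarly $Hu_i=\begin{pmatrix}v_i\\a^\top v_i+1\end{pmatrix}$ for $i\notin\sigma$, finite precisely when $a^\top v_i+1\neq 0$, with $v_i':=v_i/(a^\top v_i+1)$. And $P_2'=\begin{pmatrix}A-ba^\top & b\end{pmatrix}$ is a finite camera precisely when $A-ba^\top$ is nonsingular. Hence $a$ must avoid the hyperplane $v_i^\perp$ for each $i\in\sigma$ (a proper subspace, since $v_i\neq 0$), the affine hyperplane $\{a:a^\top v_i=-1\}$ for each $i\notin\sigma$, and the locus where $A-ba^\top$ is singular.

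To handle the last locus I would split into cases. If $b=0$, then $\begin{pmatrix}A&0\end{pmatrix}$ has rank $3$, so $A$ is already nonsingular and there is no constraint. If $b\neq 0$, then $\rank A\in\{2,3\}$ because $\begin{pmatrix}A&b\end{pmatrix}$ has rank $3$. If $\rank A=3$, the matrix determinant lemma gives $\det(A-ba^\top)=\det(A)(1-a^\top A^{-1}b)$, so the bad locus is the affine hyperplane $\{a:a^\top A^{-1}b=1\}$. If $\rank A=2$, pick $w\neq 0$ spanning $\ker A$; from $(A-ba^\top)v=0$ we get $Av=(a^\top v)b$, and since $b$ lies outside the column space of $A$ (else $\begin{pmatrix}A&b\end{pmatrix}$ would have rank $2$) this forces $a^\top v=0$ and $Av=0$, i.e. $v\in\RR w$ with $a^\top v=0$; so $A-ba^\top$ is nonsingular as soon as $a^\top w\neq 0$. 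Thus in every case the constraints on $a$ are finitely many conditions of the shape $\tilde a^\top u\neq 0$ with $u\in\RR^4\setminus\{0\}$, where $\tilde a:=(a,1)^\top$. Applying Lemma~\ref{lem:avoidence} to those $u$ together with $(0,0,0,1)^\top$ yields $c\in\RR^4$ with nonzero last coordinate satisfying $c^\top u\neq 0$ for all the relevant $u$; scaling $c$ so its last coordinate is $1$ gives the required $a$.

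With $a$ so chosen, $(P_1,P_2',\{\wh v_i'\})$ is a finite reconstruction of $\{(x_i,y_i)\}$. For the last statement, $P_1=\begin{pmatrix}I&0\end{pmatrix}$ has camera center $(0,0,0,1)^\top$, while $P_2'=\begin{pmatrix}A-ba^\top & b\end{pmatrix}$ is a finite camera with camera center $\bigl(-(A-ba^\top)^{-1}b,\,1\bigr)^\top$; these are proportional iff $(A-ba^\top)^{-1}b=0$ iff $b=0$, so $b\neq 0$ makes $P_1$ and $P_2'$ non-coincident. I expect the main obstacle to be the rank analysis of $A-ba^\top$ — in particular noticing that the $\rank A=2$ case, where $\det(A-ba^\top)$ is genuinely cubic in $a$ rather than affine, still reduces to avoiding a single hyperplane via the kernel argument — after which existence of $a$ follows immediately from Lemma~\ref{lem:avoidence}.
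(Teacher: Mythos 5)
Your proof is correct and follows essentially the same route as the paper: the same homography $H=\left(\begin{smallmatrix} I & 0\\ a^\top & 1\end{smallmatrix}\right)$, the same reduction of all requirements to avoidance conditions handled by Lemma~\ref{lem:avoidence} together with the extra vector $(0,0,0,1)^\top$ and rescaling of the last coordinate. The only divergence is how finiteness of $P_2'$ is certified: you analyze $\det(A-ba^\top)$ by cases on $\rank A$ (matrix determinant lemma, kernel argument), whereas the paper sidesteps the case split by observing that the center of $P_2H^{-1}$ is $Hc_2$ and simply adding $c_2$ to the list of vectors to avoid --- which produces exactly your conditions ($a^\top A^{-1}b\neq 1$ when $A$ is nonsingular, $a^\top w\neq 0$ when $\ker A=\RR w$) in a uniform way.
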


\begin{proof}
Let the camera centers of $P_1$ and $P_2$ be represented by $c_1 = \wh{0}$ and $c_2$ respectively. Since $c_1,c_2$, $(v_i^\top, 0)^\top, \,\,i \in \sigma $  and  $ \wh{v}_i, \,\, i\notin \sigma $  are all  nonzero  points in $\RR^4$, by Lemma \ref{lem:avoidence} there is a vector $a\in \RR^3$ and a scalar $\alpha \in \RR$ such that
\begin{align} \label{eq:avoidAll}
(a^\top \,\,\, \alpha) \,c_i \neq 0, \,\,i=1,2,  \,\,\,\, 
(a^\top  \,\,\, \alpha) \,\begin{pmatrix} v_i \\ 0 \end{pmatrix} \neq 0 \,\,(i \in \sigma), \,\,\, 
(a^\top \,\,\, \alpha) \,  \wh{v}_i\neq 0 \,\,(i\notin \sigma).
\end{align}

Since $(a^\top \,\,\alpha) \,c_1 \neq 0$, we have that $\alpha \neq 0$. So by scaling, we may assume that $\alpha=1$ in \eqref{eq:avoidAll}.

Consider the invertible matrix $\dis H := \begin{pmatrix} I & 0 \\ a^\top & 1\end{pmatrix}$. 
Then $H^{-1} := \begin{pmatrix} I & 0 \\ -a^\top & 1\end{pmatrix}$, and $P_1 H^{-1} = P_1$ and  $P_2 H^{-1} = \begin{pmatrix} A - ba^\top & b\end{pmatrix}$. Furthermore, 
$$H c_2 = \begin{pmatrix} \ast \\ 
\begin{pmatrix} a^\top & 1 \end{pmatrix} c_2\end{pmatrix}, \,\,\,
H\begin{pmatrix} v_i \\ 0 \end{pmatrix} = 
\begin{pmatrix} v_i \\    \begin{pmatrix} a^\top &1 \end{pmatrix} \begin{pmatrix} v_i \\ 0 \end{pmatrix} \end{pmatrix}, \,\,\,H \wh{v}_i = 
\begin{pmatrix} v_i \\  \begin{pmatrix} a^\top &1 \end{pmatrix} \wh{v}_i \end{pmatrix} $$
which are all  finite by \eqref{eq:avoidAll}.  
In particular, $P_2 H^{-1}$ is a finite camera as its center $Hc_2$ is finite. 
The proof is completed by taking $P_2' = P_2 H^{-1}$, 
$\wh{v}_i' \sim H \begin{pmatrix} v_i \\ 0 \end{pmatrix} $ ($i\in \sigma$) and 
$\wh{v}_i' \sim H \wh{v}_i $ ($i\notin \sigma$).

If we further assume $b\neq 0$, then $P_1$ and $P_2$ are non-coincident cameras. Hence $P_1 = P_1 H^{-1}$ and $P_2' = P_2 H^{-1}$ are also non-coincident.

\end{proof}

\noindent{\em Proof of Theorem~\ref{thm:equivReconstruction}}: 
Clearly, \eqref{3dScene3} $\Rightarrow$ \eqref{finiteScene3} $\Rightarrow$  \eqref{finiteCam3} $\Rightarrow$ \eqref{P3Scene3}. 
For \eqref{P3Scene3} $\Rightarrow$ \eqref{3dScene3}, let $H$ be a homography so that 
$P_1' := P_1 H^{-1} = \begin{pmatrix} I & 0 \end{pmatrix}$ and let 
$P_2' := P_2 H^{-1} = \begin{pmatrix} A&  b\end{pmatrix}$.
Then 
$(P_1', P_2', \{H w_i\})$ is a reconstruction of $\{(x_i,y_i)\}$. We can now use 
Lemma \ref{lem:infToFin} to turn this into a finite reconstruction where the first camera is still $\begin{pmatrix} I & 0 \end{pmatrix}$. Therefore, we conclude that all four statements in the theorem are equivalent. \hfill $\Box$

We now prove that the equivalences in Theorem~\ref{thm:equivReconstruction} also hold if we further require that the cameras are non-coincident (coincident) in each statement. 
 
\begin{theorem} \label{thm:non-coincident PR = non-coincident FPR}
The four statements in Theorem~\ref{thm:equivReconstruction} are equivalent if we replace ``cameras $P_1,P_2$''  in each statement with ``non-coincident cameras $P_1, P_2$''. 
\end{theorem}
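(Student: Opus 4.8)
The plan is to re-run the proof of Theorem~\ref{thm:equivReconstruction} while tracking the camera centers, and to verify that non-coincidence is neither created nor destroyed along the way. The implications \eqref{3dScene3} $\Rightarrow$ \eqref{finiteScene3} $\Rightarrow$ \eqref{finiteCam3} $\Rightarrow$ \eqref{P3Scene3} remain trivial after inserting the words ``non-coincident'', so the whole task reduces to proving \eqref{P3Scene3} $\Rightarrow$ \eqref{3dScene3} with non-coincident cameras. First I would record the elementary fact that non-coincidence is preserved by homographies: under $P \mapsto PH^{-1}$ the camera center transforms as $c \mapsto Hc$, since $PH^{-1}(Hc) = Pc = 0$ and $H$ is invertible, so $c_1 \sim c_2$ if and only if $Hc_1 \sim Hc_2$.

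The one genuinely new ingredient is an identification of the last column of the normalized second camera. Given a reconstruction $(P_1,P_2,\{w_i\})$ by non-coincident cameras, pick $H$ with $P_1 H^{-1} = \begin{pmatrix} I & 0 \end{pmatrix}$. Since $P_1$ has rank $3$, $\ker P_1$ is the line spanned by the camera center $c_1$ of $P_1$; the fourth column of $H^{-1}$ is sent by $P_1$ to the last column of $\begin{pmatrix} I & 0\end{pmatrix}$, namely $0$, so it lies in $\ker P_1$, and being part of a basis of $\RR^4$ it equals $\mu c_1$ for some $\mu \neq 0$. Writing $P_2 H^{-1} = \begin{pmatrix} A & b \end{pmatrix}$, the fourth column then reads $b = \mu P_2 c_1$. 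As $P_2$ also has rank $3$, $\ker P_2$ is the line spanned by the camera center $c_2$ of $P_2$, and therefore $b = 0$ if and only if $c_1 \in \ker P_2$, if and only if $c_1 \sim c_2$, if and only if $P_1,P_2$ are coincident. In the case at hand the cameras are non-coincident, so $b \neq 0$.

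To finish the argument I would apply Lemma~\ref{lem:infToFin} to the reconstruction $(\begin{pmatrix} I & 0\end{pmatrix}, \begin{pmatrix} A & b \end{pmatrix}, \{H w_i\})$ of $\{(x_i,y_i)\}$. Each $Hw_i$ is a nonzero vector whose first three coordinates do not all vanish (because $\begin{pmatrix} I & 0\end{pmatrix}(Hw_i) \sim \wh{x}_i \neq 0$), so after rescaling it is either of the form $\wh{v}_i$ or of the form $(v_i^\top,0)^\top$ with $v_i \neq 0$ --- exactly the input format of Lemma~\ref{lem:infToFin}, with $\sigma$ the set of indices for which $Hw_i$ is at infinity. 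Since $b \neq 0$, the lemma produces a finite camera $P_2'$ and finite points $\wh{v}_i'$ such that $(\begin{pmatrix} I & 0\end{pmatrix}, P_2', \{\wh{v}_i'\})$ is a finite reconstruction and $\begin{pmatrix} I & 0\end{pmatrix}, P_2'$ are non-coincident; this is precisely statement \eqref{3dScene3} with non-coincident cameras.

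I do not anticipate a serious obstacle; the only step requiring care is the second paragraph, where one must correctly relate the last column $b$ of the normalized second camera to $\mu P_2 c_1$ and hence to the coincidence predicate. The same bookkeeping also settles the parenthetical ``coincident'' variant: there $c_1 \in \ker P_2$ forces $b = 0$, so $\begin{pmatrix} A & 0\end{pmatrix}$ has rank $3$ and is therefore a finite camera coincident with $\begin{pmatrix} I & 0\end{pmatrix}$, and Lemma~\ref{lem:infToFin} applied with this $b = 0$ returns the second camera unchanged as $\begin{pmatrix} A & 0\end{pmatrix}$ while rendering the points finite, so coincidence persists throughout.
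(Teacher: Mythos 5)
Your proof is correct and follows essentially the same route as the paper: reduce everything to \eqref{P3Scene3} $\Rightarrow$ \eqref{3dScene3}, normalize the first camera by a homography (noting non-coincidence is preserved), check that the last column $b$ of the transformed second camera is nonzero, and invoke the last part of Lemma~\ref{lem:infToFin}. The only (harmless) local difference is how $b \neq 0$ is obtained: you identify $b$ as a nonzero multiple of $P_2 c_1$ and appeal to non-coincidence directly, whereas the paper argues by cases on whether $A$ is singular; both arguments are valid.
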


\begin{proof} As before, we only need to show that \eqref{P3Scene3} $\Rightarrow$ \eqref{3dScene3}. Let $P_1' = P_1 H^{-1} = \begin{pmatrix} I & 0 \end{pmatrix}$ and $P_2' = P_2 H^{-1} = \begin{pmatrix} A&  b\end{pmatrix}$ as in the proof of this direction in Theorem~\ref{thm:equivReconstruction}. 
If $P_1$ and $P_2$ are non-coincident in \eqref{P3Scene3}, then   $P_1'$ and $P_2'$ are also non-coincident. If $A$ is nonsingular then $b\neq 0$.
 If $A$ is singular, then $b\neq 0$ because rank$(P_2') = 3$. Now using the last part of Lemma \ref{lem:infToFin}, we can turn the reconstruction $(P_1', P_2', \{H w_i\})$ into a finite reconstruction with non-coincident cameras with the first camera equal to $\begin{pmatrix} I & 0 \end{pmatrix}$. This is the statement in \eqref{3dScene3}.
\end{proof}

\begin{theorem} \label{thm:coincident PR = coincident FPR}
The four statements in Theorem~\ref{thm:equivReconstruction} are  equivalent if we replace ``cameras $P_1,P_2$''  in each statement with ``coincident cameras $P_1, P_2$''. 
\end{theorem}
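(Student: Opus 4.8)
The plan is to follow the same template as the proofs of Theorems~\ref{thm:equivReconstruction} and \ref{thm:non-coincident PR = non-coincident FPR}. The implications \eqref{3dScene3} $\Rightarrow$ \eqref{finiteScene3} $\Rightarrow$ \eqref{finiteCam3} $\Rightarrow$ \eqref{P3Scene3} are immediate and keep the two cameras unchanged, so if the cameras are coincident in the stronger statement they are coincident in the weaker one; hence it suffices to prove \eqref{P3Scene3} $\Rightarrow$ \eqref{3dScene3} when $P_1,P_2$ are required to be coincident. The one small fact I will lean on repeatedly is that coincidence is preserved under a common right-multiplication by an invertible $4\times 4$ matrix: if a camera $P$ has center $c$, then $PH^{-1}$ has center $Hc$, so $c_1\sim c_2$ forces $Hc_1\sim Hc_2$. (This is the same reasoning already used for non-coincidence in the proof of Theorem~\ref{thm:non-coincident PR = non-coincident FPR}.)

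Concretely, suppose $(P_1,P_2,\{w_i\})$ is a reconstruction of $\{(x_i,y_i)\}$ with $P_1,P_2$ coincident. Choose an invertible $H$ with $P_1':=P_1H^{-1}=\begin{pmatrix} I & 0\end{pmatrix}$, and set $P_2':=P_2H^{-1}=\begin{pmatrix} A & b\end{pmatrix}$ and $w_i':=Hw_i$; then $(P_1',P_2',\{w_i'\})$ is again a reconstruction, and $P_1',P_2'$ are coincident by the fact above. The camera center of $P_1'$ is the finite point $\wh{0}$, so the center of $P_2'$ is $\sim\wh{0}$ as well, which forces $A$ to be nonsingular and in fact $b=0$; in particular $P_2'$ is already a finite camera. (Thus the hypothesis $b\neq 0$ in the ``in addition'' clause of Lemma~\ref{lem:infToFin} never applies here, which is why that clause cannot be quoted directly.)

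Next I would feed $(P_1',P_2',\{w_i'\})$ into Lemma~\ref{lem:infToFin}. Writing $w_i'=(u_i^\top,t_i)^\top$ with $u_i\in\RR^3$, the relation $P_1'w_i'=u_i\sim\wh{x}_i$ shows $u_i\neq 0$; put $i\in\sigma$ and $v_i:=u_i$ when $t_i=0$, and put $i\notin\sigma$ and $v_i:=u_i/t_i$ (so that $w_i'\sim\wh{v}_i$) when $t_i\neq 0$. These $\sigma$ and $v_i$ satisfy the hypotheses of Lemma~\ref{lem:infToFin} with the cameras there taken to be $(P_1',P_2')$, so the lemma produces a finite camera $P_2''$ and finite points $\wh{v}_i'$ with $(P_1',P_2'',\{\wh{v}_i'\})$ a finite reconstruction of $\{(x_i,y_i)\}$ and $P_1'=\begin{pmatrix} I&0\end{pmatrix}$ still the first camera. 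Crucially, the proof of Lemma~\ref{lem:infToFin} obtains $P_2''=P_2'H^{-1}$ (and $P_1'=P_1'H^{-1}$) for a single invertible matrix $H$; applying the preservation fact once more, $P_1'$ and $P_2''$ are coincident. This is exactly statement \eqref{3dScene3} with coincident cameras, completing the proof.

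The only genuine obstacle is the one just flagged: Lemma~\ref{lem:infToFin} is stated with a conclusion about non-coincidence only, so one must look inside its proof and notice that its camera replacement is a common change of world coordinates, which automatically preserves coincidence. (Equivalently, once one records $b=0$, the lemma's matrix $H$ gives $P_2''=\begin{pmatrix} A-ba^\top & b\end{pmatrix}=\begin{pmatrix} A & 0\end{pmatrix}$, whose center is $\wh{0}$, the center of $P_1'$.) Everything else is bookkeeping identical to the earlier proofs.
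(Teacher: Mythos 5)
Your argument is correct: the reduction to \eqref{P3Scene3} $\Rightarrow$ \eqref{3dScene3}, the observation that a common right-multiplication by $H^{-1}$ sends centers $c$ to $Hc$ and hence preserves coincidence, and the resulting conclusion that $b=0$ and $A$ is nonsingular are exactly the opening moves of the paper's proof. Where you diverge is the finishing step. The paper exploits $b=0$ directly: since the last column of both $P_1'=\begin{pmatrix} I & 0\end{pmatrix}$ and $P_2'=\begin{pmatrix} A & 0\end{pmatrix}$ is zero, the fourth coordinate of each world point is irrelevant to its images, so one simply resets that coordinate to $1$ (the first three coordinates are nonzero because $P_1'(Hw_i)\sim\wh{x}_i$), and the reconstruction is instantly finite with the same, already finite and coincident, cameras. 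You instead feed the data into Lemma~\ref{lem:infToFin}, which is legitimate (your choice of $\sigma$ and $v_i$ verifies its hypotheses, and the lemma does not require $b\neq 0$), but since the lemma's statement is silent about coincidence you are forced to open up its proof and note that $P_2''=\begin{pmatrix} A-ba^\top & b\end{pmatrix}=\begin{pmatrix} A & 0\end{pmatrix}$ still has center $\wh{0}$. That extra step is sound --- your explicit computation closes it --- but it makes the argument heavier than necessary: with $b=0$ the lemma's homography is just a roundabout way of making the last coordinates nonzero, which the paper's one-line substitution achieves for free. So: correct proof, same skeleton, with a more machinery-dependent final step where the paper has an elementary shortcut.
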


\begin{proof}
Again, we only need to prove that \eqref{P3Scene3} $\Rightarrow$ \eqref{3dScene3}. If $P_1, P_2$ are coincident cameras in 
\eqref{P3Scene3}, then $P_1' = P_1 H^{-1} = \begin{pmatrix} I & 0 \end{pmatrix}$ and 
$P_2' = P_2 H^{-1} = \begin{pmatrix} A&  b\end{pmatrix}$ are also coincident. Therefore, $\wh{0}$ is their common center and hence $b=0$. This implies that $A$ is 
nonsingular since otherwise $\rank(P_2') < 3$. 
Now consider the points $w_i'$ obtained by setting the last coordinate of each $w_i$ from the reconstruction in \eqref{P3Scene3} to $1$. Then $(P_1', P_2', \{w_i'\})$ is a finite reconstruction of $\{(x_i,y_i)\}$.
\end{proof}

By the above results we can always obtain a finite projective reconstruction whenever a projective reconstruction exists. Also, if the projective reconstruction was with non-coincident (coincident) cameras there is also a  finite reconstruction with non-coincident (coincident) cameras. Further, in each case the first camera can be assumed to be $\begin{pmatrix} I & 0 \end{pmatrix}$. This understanding will be useful in the next section.

We end this section by discussing the geometry of the point pairs for which a projective reconstruction with coincident cameras exists.

\begin{definition} \label{def:projectively equivalent}
{\rm
Given $(x_i,y_i)\in \RR^2\times \RR^2$, $i = 1,\ldots,m$, 
we say that $\{x_i\}$ is 
{\em projectively equivalent} to  $\{y_i\}$
if there is a nonsingular matrix 
$H\in \RR^{3\times 3}$ 
such that 
$H\wh{x}_i \sim \wh{y}_i$ for all $1\leq i \leq m$. 
}
\end{definition}

The following result captures the close relationship between projectively equivalent point sets and projective reconstruction with coincident cameras.

\begin{theorem} \label{thm:upToNonsingular}
Let $(x_i,y_i) \in \RR^2 \times \RR^2$, $i = 1,\ldots,m$ be given. 
Then there exists a finite reconstruction $(P_1=\begin{pmatrix} I & 0 \end{pmatrix},P_2,\{\wh{w}_i\}_{i=1}^m)$ 
of $\{(x_i,y_i)\}$ where $P_1$ and $P_2$ are coincident cameras if and only if 
$\{x_i\}$ is projectively equivalent to  $\{y_i\}$.
\end{theorem}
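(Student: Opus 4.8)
The statement is an ``if and only if,'' so I would prove the two directions separately, and in both I expect the algebra to be short once the right normal form is in place.

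\emph{The forward direction.} Suppose $(P_1 = \begin{pmatrix} I & 0 \end{pmatrix}, P_2, \{\wh w_i\})$ is a finite reconstruction with $P_1, P_2$ coincident. The camera center of $P_1$ is $\wh 0 = (0,0,0,1)^\top$, so coincidence forces the center of $P_2$ to be $\wh 0$ as well; writing $P_2 = \begin{pmatrix} A & b \end{pmatrix}$, the center condition gives $b = 0$, and then $\rank(P_2) = 3$ forces $A$ nonsingular. (This is exactly the observation already used in the proof of Theorem~\ref{thm:coincident PR = coincident FPR}.) Now from $P_1 \wh w_i \sim \wh x_i$ and $P_2 \wh w_i \sim \wh y_i$, writing $\wh w_i = (u_i^\top, 1)^\top$ with $u_i \in \RR^3$, the first relation says $u_i \sim \wh x_i$, and since both have last coordinate $1$ we get $u_i = \wh x_i$. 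The second relation then reads $A \wh x_i \sim \wh y_i$ for all $i$, so $H := A$ witnesses that $\{x_i\}$ is projectively equivalent to $\{y_i\}$.

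\emph{The reverse direction.} Conversely, suppose there is a nonsingular $H \in \RR^{3\times 3}$ with $H \wh x_i \sim \wh y_i$ for all $i$. Set $P_1 := \begin{pmatrix} I & 0 \end{pmatrix}$, $P_2 := \begin{pmatrix} H & 0 \end{pmatrix}$, and $\wh w_i := \wh x_i$ (i.e.\ $w_i = x_i \in \RR^3$, a finite point). Then $P_2$ is a finite camera (its $3\times 3$ block $H$ is nonsingular) with center $\wh 0$, so $P_1$ and $P_2$ are coincident. One checks directly that $P_1 \wh w_i = \wh x_i \sim \wh x_i$ and $P_2 \wh w_i = H \wh x_i \sim \wh y_i$, so $(P_1, P_2, \{\wh w_i\})$ is a finite reconstruction of the desired form.

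\emph{Main obstacle.} There is essentially no hard step here: the only thing requiring a moment's care is the extraction of $b = 0$ and the nonsingularity of $A$ from ``coincident with $P_1 = \begin{pmatrix} I & 0 \end{pmatrix}$,'' but this is precisely the bookkeeping already carried out in Theorem~\ref{thm:coincident PR = coincident FPR}, so I would simply cite that argument. The remaining content is the observation that when the first camera is $\begin{pmatrix} I & 0 \end{pmatrix}$ and the scene points are finite, the scene points are forced to equal the $\wh x_i$, which collapses the reconstruction condition on $P_2$ into exactly the projective-equivalence condition on $H = A$.
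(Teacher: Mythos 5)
Your proof is correct and follows the paper's argument essentially verbatim: coincidence with $P_1=\begin{pmatrix} I & 0\end{pmatrix}$ forces $b=0$ (with $A$ nonsingular), unwinding the reconstruction relations gives $A\wh{x}_i\sim\wh{y}_i$, and conversely $P_2:=\begin{pmatrix} H & 0\end{pmatrix}$ with scene points $w_i:=\wh{x}_i\in\RR^3$ (so $\wh{w}_i=(\wh{x}_i^\top,1)^\top$, the paper's $\wh{\wh{x}}_i$) gives the required coincident finite reconstruction. Two cosmetic slips, neither affecting validity: in the forward direction you cannot conclude $u_i=\wh{x}_i$ exactly, since only the fourth coordinate of $\wh{w}_i$ is normalized and the third coordinate of $u_i=w_i$ need not be $1$, but $u_i\sim\wh{x}_i$ is all you actually use; and in the converse ``$\wh{w}_i:=\wh{x}_i$'' should read $w_i:=\wh{x}_i$, as the dimensions otherwise do not match.
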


\begin{proof}
Suppose $P_2 =  \begin{pmatrix} A & b \end{pmatrix}$.  
If $P_1$ and $P_2$ are coincident, then their common camera center is $\wh{0}$ which is finite. 
Hence $P_2$ is a finite camera and 
$b=0$.  Unwinding $P_1 \wh{w}_i \sim \wh{x}_i$ and $P_2 \wh{w}_i \sim \wh{y}_i$ we obtain 
$A\wh{x}_i \sim \wh{y}_i$ for all $i=1,\ldots,m$. 

 For the converse, suppose there exists a nonsingular matrix $H\in \RR^{3\times 3}$ such that 
$H\wh{x}_i \sim \wh{y}_i$ for all $i=1,\ldots,m$. Then setting $P_1 := \begin{pmatrix} I & 0 \end{pmatrix}$ and 
$P_2 := \begin{pmatrix} H & 0 \end{pmatrix}$, and using the notation 
$\wh{\wh{a}}$ for $(\wh{a}^\top,1)^\top$ where $a \in \RR^2$, we see that 
$(P_1,P_2,\{\wh{\wh{x}}_i\}_{i=1}^{m} )$ is a projective reconstruction of $\{(x_i,y_i)\}$ with two coincident cameras.
\end{proof}

\section{Main theorem} \label{s4}

We now come to the more general situation of reconstruction. In this case, there is a distinguished fundamental matrix associated to the point pairs coming from the cameras in the reconstruction.
We remark that the some results in this section are formally or informally stated in \cite{hartley-zisserman-2003}, but we prove them using linear algebra instead of classical projective geometry.

\begin{theorem} \label{thm:fromSceneToEpipolar}
Let $(x_i,y_i) \in \RR^2 \times \RR^2$, $i = 1,\ldots,m$ be given. 
Consider two finite cameras 
$P_1 :=\begin{pmatrix} I & 0 \end{pmatrix}$ and $P_2:= \begin{pmatrix} A & b \end{pmatrix}$.
Suppose that 
there exist $w_i \in \RR^3$ ($1\leq  i \leq m$) such that 
  $(P_1,P_2,\{\wh{w}_i\})$ is a reconstruction of $\{(x_i,y_i)\}$.
Then there is a fundamental matrix associated to  $\{(x_i,y_i)\}$.
\end{theorem}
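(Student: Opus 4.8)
The plan is to produce an explicit fundamental matrix from the camera data $A$ and $b$ and then check its two defining properties (rank two and the epipolar constraints). First I would unwind the reconstruction hypotheses. Because $P_1 = \begin{pmatrix} I & 0 \end{pmatrix}$ we have $P_1 \wh w_i = w_i$, so $P_1 \wh w_i \sim \wh x_i$ forces $w_i = \lambda_i \wh x_i$ for some $\lambda_i \in \RR \setminus \{0\}$ (here the scalar is nonzero by the definition of $\sim$); substituting into $P_2 \wh w_i = A w_i + b \sim \wh y_i$ yields scalars $\mu_i \in \RR \setminus \{0\}$ with
\begin{equation*}
\lambda_i A \wh x_i + b \;=\; \mu_i \wh y_i , \qquad i = 1, \ldots, m .
\end{equation*}
These relations are all I intend to use.

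When $b \neq 0$ I would take $F := [b]_\times A$. Since $P_2$ is finite, $A$ is invertible, so right multiplication by $A$ preserves rank and $\rank F = \rank [b]_\times = 2$. For the epipolar constraints, multiply $\wh y_i^\top F \wh x_i$ by the nonzero scalar $\lambda_i$ and use the displayed relation, the skew-symmetry of $[b]_\times$, and $[b]_\times b = b \times b = 0$:
\begin{equation*}
\lambda_i \, \wh y_i^\top [b]_\times A \wh x_i \;=\; \wh y_i^\top [b]_\times (\mu_i \wh y_i - b) \;=\; \mu_i \, \wh y_i^\top [b]_\times \wh y_i - \wh y_i^\top [b]_\times b \;=\; 0 .
\end{equation*}
Hence $F$ is a fundamental matrix associated to $\{(x_i,y_i)\}$.

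The case $b = 0$ — which is exactly when $P_1$ and $P_2$ are coincident — needs a different matrix, since $[b]_\times A$ is then the zero matrix. Here the displayed relation reads $\wh y_i = t_i A \wh x_i$ with $t_i := \lambda_i / \mu_i \neq 0$, so I would pick any $v \in \RR^3 \setminus \{0\}$ and set $F := A^{-\top} [v]_\times$; then $\rank F = \rank [v]_\times = 2$, and
\begin{equation*}
\wh y_i^\top F \wh x_i \;=\; t_i \, (A \wh x_i)^\top A^{-\top} [v]_\times \wh x_i \;=\; t_i \, \wh x_i^\top [v]_\times \wh x_i \;=\; 0 .
\end{equation*}

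I expect the only real obstacle to be not overlooking this degenerate case: the textbook choice $[b]_\times A$ collapses to $0$ for coincident cameras, so one must supply the substitute $A^{-\top}[v]_\times$. Everything else is a one-line computation using only that $[\,\cdot\,]_\times$ is skew-symmetric and that $A$ is invertible. One could even merge the two cases via the identity $[b]_\times A = (\det A)\, A^{-\top} [A^{-1} b]_\times$, so that in both cases $F = A^{-\top}[v]_\times$ with $v = A^{-1} b$ (and $v$ an arbitrary nonzero vector when $b = 0$); but keeping the cases separate seems clearer.
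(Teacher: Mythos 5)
Your proposal is correct and follows essentially the same route as the paper: in the non-coincident case ($b\neq 0$) both use $F=[b]_\times A$ together with the relation $\lambda_i A\wh{x}_i+b=\mu_i\wh{y}_i$ and skew-symmetry of $[b]_\times$. In the coincident case your choice $F=A^{-\top}[v]_\times$ is only a cosmetic variant of the paper's $[t]_\times H$ (the paper routes through Theorem \ref{thm:upToNonsingular}, while you note directly that $b=0$ gives $\wh{y}_i\sim A\wh{x}_i$), and both verifications reduce to the same skew-symmetry computation.
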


\begin{proof} 
Suppose that $P_1$ and $P_2$ are non-coincident cameras. Since $A$ is nonsingular one has $b\neq 0$. 
 Define $F := [b]_\times A$. Since $b \neq 0$, $\rank([b]_\times) = 2$ and $\rank(F)=2$. For a fixed $i$, the relations 
$P_1  \wh{w}_i\sim \wh{x}_i$ and $P_2 \wh{w}_i \sim \wh{y}_i$ imply that 
$\lambda_i A \wh{x}_i +  b = \mu_i \wh{y}_i$ 
for some $\lambda_i \neq 0$, $\mu_i \neq 0$. 
Hence, $F$ satisfies the epipolar constraints involving $x_i$ and $y_i$:
\begin{align*}
\wh{y}_i^\top F \wh{x}_i 
& \sim ( \lambda_i  (A\wh{x}_i)^\top +  b^\top) [b]_\times A \wh{x}_i  
\sim  (A\wh{x}_i)^\top[b]_\times A \wh{x}_i = ( A \wh{x}_i)^\top (b \times  A \wh{x}_i) = 0.
\end{align*}
If $P_1$ and $P_2$ are coincident, then there is a nonsingular matrix $H$ such that $Hx_i \sim y_i$ for all $1\leq i \leq m$, by 
Theorem \ref{thm:upToNonsingular}.  Let $t$ be any nonzero vector in $\R^3$. It follows that for any $i = 1, \ldots,m$, 
$$
y_i^\top [t]_\times H  x_i \sim y_i^\top [t]_\times y_i = y_i^\top (t \times y_i) = 0. 
$$
Thus $[t]_\times H$ is a fundamental matrix associated to $\{(x_i,y_i)\}$.
\end{proof}

We now introduce a regularity condition on $\{ (x_i, y_i) \}_{i=1}^{m}$ that is necessary for the existence of a projective reconstruction with non-coincident cameras. We will see that when the point pairs $(x_i, y_i)$ are regular, 
a reconstruction with non-coincident cameras exists if and only if a fundamental matrix exists.

\begin{definition} \label{def:irregular}
Let $A\in \RR^{3\times 3}$ and $b\in \RR^3$.
We say that $(x,y)\in \RR^2\times \RR^2$ is $(A,b)$-{\em irregular} if one of the following mutually exclusive 
conditions hold:
\begin{align} \label{eq:irregularEquiv}
([b]_\times A\wh{x} = 0\text{ and }\wh{y}^\top [b]_\times \neq 0) \,\,\,\textup{ or } \,\,\,
([b]_\times A\wh{x} \neq 0 \text{ and } \wh{y}^\top [b]_\times = 0).
\end{align}
Say $(x,y)$ is $(A,b)$-{\em regular} if it is not $(A,b)$-irregular.
\end{definition}

If $(x,y)$ is an $(A,b)$-irregular pair then $\wh{y}^\top [b]_\times A \wh{x} = 0$. This implies that if 
$P_1 = \begin{pmatrix} I & 0 \end{pmatrix}$ and $P_2 = \begin{pmatrix} A & b \end{pmatrix}$ are non-coincident finite cameras then $(x,y)$ satisfies the epipolar constraint $\wh{y}^\top F \wh{x} = 0$ (where $F = [b]_\times A$) whether or not there is a reconstruction $w \in \PP^3$ of $(x,y)$. In fact, more is true.

Since $P_2 = \begin{pmatrix} A & b \end{pmatrix}$ is non-coincident with $P_1$, one has $b \neq 0$. Since the fundamental matrix $F:= [b]_\times A$ has rank two, both the left and right kernel of $F$ are one-dimensional. 
Let $e_1,e_2\in \RR^3\setminus\{0\}$ be a basis vector of the 
right and left kernel of $F$ respectively. 
Then $e_1$ is called an {\em epipole} of $P_1$ while $e_2$ is called an epipole of $P_2$. 
It is known that $P_1 c_2 \sim e_1$ and $P_2 c_1 \sim e_2$, where $c_1 = \wh{0}$  and $c_2 = (-A^{-1}b^\top, 1)^\top$ 
are the camera centres of $P_1$ and $P_2$ respectively. This implies we can take $e_1 := A^{-1}b$ and $e_2 := b$.

\begin{figure} 
\centering
\def\svgwidth{250pt}
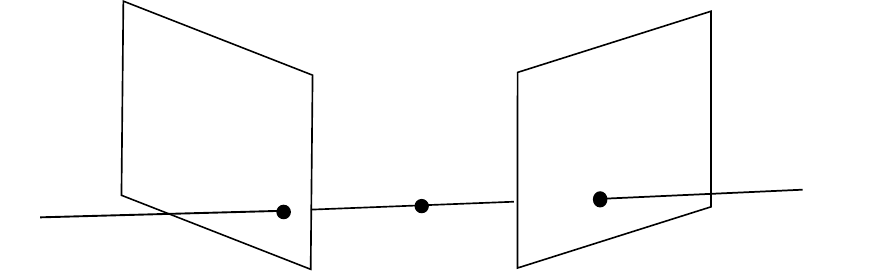
\caption{The tuple $(P_1,P_2,\wh{w})$ reconstructs $(x,y)$.}
\label{fig:epipole}  
\end{figure}

Suppose $(x,y)$ is $(A,b)$-irregular. Then as we saw earlier, 
$\wh{y}^\top F \wh{x} = 0$ holds. If $[b]_\times A \wh{x} = 0$  and $\wh{y}^\top [b]_\times \neq 0$ 
then $\wh{x}$ is an epipole of $P_1$ 
 but $\wh{y}$ is not an epipole of $P_2$. 
 If $[b]_\times A\wh{x} \neq 0 \text{ and } \wh{y}^\top [b]_\times = 0$ holds 
  then $\wh{x}$ is not an epipole of $P_1$ but $\wh{y}$ is an epipole of $P_2$.
 On the other hand, we see from Figure \ref{fig:epipole} that if  $(P_1,P_2,\wh{w})$ is a reconstruction of $(x,y)$ for some $w\in \RR^3$, and 
if $\wh{x}$ is the epipole of $P_1$, then $\wh{y}$ has to the epipole of $P_2$ (the epipoles of the two cameras lie on the line connecting the centers of the two cameras.)
This means if $(x,y)$ is $(A,b)$-irregular, then there is no finite reconstruction for $(x,y)$ using $P_1,P_2$, even though the epipolar 
constraint is trivially satisfied. This proves the following lemma.

\begin{lemma} \label{lem:regularity necessary for FPR}
Suppose that  $P_1 = \begin{pmatrix} I & 0 \end{pmatrix}$ and $P_2 = \begin{pmatrix} A & b \end{pmatrix}$ are two non-coincident finite cameras. Then, if $(x,y)$ is  $(A,b)$-irregular, 
then 
there is no $w \in \RR^3$ such that  $(P_1,P_2, \wh{w}  )$ is a reconstruction of $(x,y)$.
\end{lemma}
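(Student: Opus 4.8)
The plan is to argue by contradiction. Suppose $(x,y)$ is $(A,b)$-irregular and that $(P_1,P_2,\wh{w})$ is nonetheless a reconstruction of $(x,y)$ for some $w\in\RR^3$; I will derive a contradiction in each of the two mutually exclusive cases of Definition~\ref{def:irregular}. The first step is to unwind the reconstruction equations. Since $P_1 = \begin{pmatrix} I & 0 \end{pmatrix}$, the relation $P_1\wh{w}\sim\wh{x}$ says exactly $w\sim\wh{x}$, so $w = \lambda\wh{x}$ for some $\lambda\neq 0$; substituting into $P_2\wh{w} = Aw + b \sim \wh{y}$ yields the single vector identity $\lambda A\wh{x} + b = \mu\wh{y}$ for some $\mu\neq 0$. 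The argument will rely only on this identity together with three nonvanishing facts: $b\neq 0$ (because $P_1,P_2$ are non-coincident), $A\wh{x}\neq 0$ (because $A$ is nonsingular), and $\wh{x},\wh{y}\neq 0$ (they have last coordinate $1$).

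Before splitting into cases I would record the kernel reformulations of the irregularity conditions. Since $b\neq 0$, the matrix $[b]_\times$ has rank two with both its right and left kernels equal to $\linspan\{b\}$. Hence $[b]_\times A\wh{x} = 0$ is equivalent to $A\wh{x} = \nu b$ for some scalar $\nu$, which is automatically nonzero since $A\wh{x}\neq 0$; and $\wh{y}^\top[b]_\times = 0$ is equivalent to $\wh{y} = \rho b$ for some $\rho\neq 0$. These turn both cases into short scalar computations.

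Case one is $[b]_\times A\wh{x} = 0$ and $\wh{y}^\top[b]_\times\neq 0$. Writing $A\wh{x} = \nu b$ and plugging into $\lambda A\wh{x} + b = \mu\wh{y}$ gives $(\lambda\nu + 1)\,b = \mu\wh{y}$. If $\lambda\nu + 1 = 0$ then $\mu\wh{y} = 0$, forcing $\wh{y} = 0$, which is impossible; so $\lambda\nu + 1\neq 0$, whence $\wh{y}\sim b$ and therefore $\wh{y}^\top[b]_\times = 0$, contradicting the case hypothesis. Case two is $[b]_\times A\wh{x}\neq 0$ and $\wh{y}^\top[b]_\times = 0$. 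Writing $\wh{y} = \rho b$ and plugging in gives $\lambda A\wh{x} = (\mu\rho - 1)\,b$; since $\lambda\neq 0$ and $A\wh{x}\neq 0$, the scalar $\mu\rho - 1$ must be nonzero, so $A\wh{x}\sim b$ and hence $[b]_\times A\wh{x} = 0$, again a contradiction. In both cases we contradict irregularity, so no reconstruction $w\in\RR^3$ can exist.

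The computations here are entirely routine; the only point requiring a little care is the bookkeeping of which scalars are forced to be nonzero. Specifically, the hypothesis $\wh{y}\neq 0$ is exactly what rules out the degenerate subcase $\lambda\nu + 1 = 0$ in case one, and nonsingularity of $A$ (hence $A\wh{x}\neq 0$) is what rules out $A\wh{x} = 0$ in case two. This is precisely the algebraic content of the informal statement that the epipoles of the two cameras lie on the line joining their centres, as in Figure~\ref{fig:epipole}.
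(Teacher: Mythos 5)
Your proof is correct, and it takes a different route from the paper: the paper's argument for this lemma is the geometric one via epipoles and Figure~\ref{fig:epipole} — it observes that irregularity means exactly one of $\wh{x}$, $\wh{y}$ is an epipole of its camera, and then invokes the fact that the epipoles of two non-coincident cameras lie on the line joining the camera centres, so a reconstruction would force both or neither to be epipoles. Your argument is instead the ``simple algebraic computation'' that the paper mentions immediately after the lemma but never writes out: you unwind the reconstruction to the identity $\lambda A\wh{x}+b=\mu\wh{y}$ with $\lambda,\mu\neq 0$, reformulate the two irregularity conditions via the kernel of $[b]_\times$ (using $b\neq 0$, $A$ nonsingular, $\wh{x},\wh{y}\neq 0$), and derive a contradiction in each case by a short scalar calculation. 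Your bookkeeping of the nonzero scalars is right in both cases, so the argument is complete. What each approach buys: the paper's version supplies the geometric picture (and motivates the definition of regularity in terms of epipoles), but it leans on an unproved classical fact read off from a figure, which sits oddly with the note's stated aim of giving purely linear-algebraic proofs; your version is self-contained, entirely elementary, and is arguably the proof the note intends as rigorous — at the cost of not explaining why the condition is geometrically about epipoles.
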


Notice that Lemma \ref{lem:regularity necessary for FPR} can also be verified using a simple algebraic computation, without using the notion of an epipole and the help of Figure \ref{fig:epipole}.

%

The following two  lemmas will be used to prove the main theorem of this section.

\begin{lemma} \label{lem:regular implies reconstruction}
Suppose that $P_1 = \begin{pmatrix} I & 0 \end{pmatrix}$ and $P_2 = \begin{pmatrix} A & b \end{pmatrix}$ are two non-coincident finite cameras. Then, if $(x,y)$ is  $(A,b)$-regular  and $\wh{y}^\top [b]_\times A \wh{x}=0$, 
then there exists $w \in \PP^3$  such that  $(P_1,P_2, w)$ is a reconstruction of $(x,y)$.
\end{lemma}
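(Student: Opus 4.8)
The plan is to exhibit the reconstructing point $w\in\PP^3$ explicitly. Because $P_1=\begin{pmatrix} I & 0\end{pmatrix}$, the requirement $P_1 w\sim\wh{x}$ forces the first three coordinates of $w$ to be a nonzero multiple of $\wh{x}$; after rescaling $w$ I may assume $w=(\wh{x}^\top,t)^\top$ for a scalar $t$ still to be determined, and $w\neq 0$ since $\wh{x}\neq 0$. For such $w$ we have $P_1 w=\wh{x}$ automatically, and $P_2 w=A\wh{x}+tb$. So the whole lemma reduces to: find $t\in\RR$ and $\mu\neq 0$ with $A\wh{x}+tb=\mu\wh{y}$; equivalently, the projective point $[\wh{y}]\in\PP^2$ should lie on the affine line $\{A\wh{x}+tb:t\in\RR\}$ rather than merely on its projective closure, whose extra point is $[b]$.

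Next I would unpack the hypotheses. The epipolar condition $\wh{y}^\top[b]_\times A\wh{x}=0$ says exactly that the triple product $\wh{y}\cdot(b\times A\wh{x})$ vanishes, i.e.\ $\wh{y}$, $b$, $A\wh{x}$ are linearly dependent in $\RR^3$; and since $P_2$ is finite, $A$ is invertible, so $A\wh{x}\neq 0$; and since $P_1,P_2$ are non-coincident, $b\neq 0$.

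Then the argument splits on whether $b$ and $A\wh{x}$ are linearly independent. If they are, they span a plane containing $\wh{y}$, so $\wh{y}=\alpha A\wh{x}+\beta b$ with $(\alpha,\beta)\neq(0,0)$; when $\alpha\neq 0$ we simply take $t=\beta/\alpha$ and $\mu=1/\alpha$. The only obstruction is $\alpha=0$, i.e.\ $\wh{y}\sim b$: then $\wh{y}^\top[b]_\times=0$ (using $b^\top[b]_\times=0$) while $[b]_\times A\wh{x}=b\times A\wh{x}\neq 0$ (as $b,A\wh{x}$ are independent), and this is precisely the second alternative of Definition~\ref{def:irregular}, contradicting $(A,b)$-regularity. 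If instead $b$ and $A\wh{x}$ are dependent, write $A\wh{x}=\gamma b$ with $\gamma\neq 0$; then $[b]_\times A\wh{x}=0$, so by regularity $\wh{y}^\top[b]_\times=0$, whence (by skew-symmetry of $[b]_\times$) $b\times\wh{y}=0$, i.e.\ $\wh{y}\sim b\sim A\wh{x}$, and $t=0$ together with the scalar $\mu\neq 0$ from $A\wh{x}=\mu\wh{y}$ does the job.

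In every case we have produced $t\in\RR$ and $\mu\neq 0$ with $A\wh{x}+tb=\mu\wh{y}$, so $w:=(\wh{x}^\top,t)^\top$ satisfies $P_1 w\sim\wh{x}$ and $P_2 w\sim\wh{y}$, i.e.\ $(P_1,P_2,w)$ is a reconstruction of $(x,y)$. The only non-routine step, and the single place the regularity hypothesis is used, is ruling out the degenerate configuration in which exactly one of $A\wh{x}$, $\wh{y}$ is proportional to the epipole direction $b$ of $P_2$ — geometrically, the case where $[\wh{y}]$ sits at the point at infinity of the line in question — and Definition~\ref{def:irregular} is tailored precisely to exclude it. Everything else is a direct linear-algebra computation.
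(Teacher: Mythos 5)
Your proof is correct and follows essentially the same route as the paper: the same candidate point $w=(\wh{x}^\top,t)^\top$, the same use of the epipolar constraint to get linear dependence of $\wh{y},b,A\wh{x}$, and regularity invoked exactly where the paper invokes it, to exclude the configuration where only one of $A\wh{x},\wh{y}$ is proportional to $b$. Your case split (on independence of $b$ and $A\wh{x}$) is a slightly cleaner organization of the paper's three cases, since you use at the outset that $A$ is nonsingular so $A\wh{x}\neq 0$, but the argument is the same in substance.
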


\begin{proof}
The assumptions about $P_1$ and $P_2$, and the equation  $\wh{y}^\top [b]_\times A \wh{x}=0$ imply $\wh{y},b,A\wh{x}$ are 
nonzero linearly dependent vectors in $\RR^3$. 
Thus there are scalars $\gamma,\beta,\alpha\in \RR$, not all zero, such that
\begin{align} \label{eq:linComb}
\gamma A\wh{x} = \beta \wh{y} - \alpha b.
\end{align} 
For a scalar $\delta$, define $w_\delta := \begin{pmatrix} \wh{x} \\ \delta \end{pmatrix} $. Then we obtain
$$
P_1 w_\delta  = \wh{x}, \  \text{ and } 
P_2 w_\delta  = A\wh{x} + \delta b.
$$
There are three cases to consider.

Case 1: $\gamma = 0$.

Then $\wh{y}\sim b$. If $A\wh{x} = 0$, then $P_2 w_\alpha = \beta \wh{y} \sim \wh{y}$ so $(P_1,P_2,w_\alpha)$ is a reconstruction of $(x,y)$. If $A\wh{x}\neq 0$, then $\wh{y} \sim A\wh{x}$ by the regularity of $(x,y)$. 
Thus $P_2 w_0 = A\wh{x} \sim \wh{y}$ so 
$(P_1,P_2,w_0)$ is a reconstruction of $(x,y)$.

Case 2: $\gamma \neq 0$ and $\beta = 0$.

In this case \eqref{eq:linComb} gives  $A\wh{x} = - \alpha b$ after scaling. If $\alpha = 0$ then $A\wh{x} = 0$ and $\wh{y}\sim b$ by the regularity of $(x,y)$. Thus $P_2 w_1 = b \sim \wh{y}$ which means $(P_1,P_2,w_1)$ is a reconstruction of $(x,y)$.
If $\alpha \neq 0$ then $A\wh{x}\neq 0$ and $A\wh{x} \sim b$. By the regularity of $(x,y)$, one has 
$\wh{y} \sim A\wh{x}$. Thus $(P_1,P_2,w_0)$ is a reconstruction of $(x,y)$.

Case 3: $\gamma \neq 0$ and $\beta \neq 0$.

\eqref{eq:linComb} implies $A\wh{x} = \beta \wh{y} - \alpha b$ after scaling. 
Hence $P_2 w_\alpha = A\wh{x} + \alpha b = \beta \wh{y} \sim \wh{y}$ which concludes that  
$(P_1,P_2,w_\alpha)$ is a reconstruction of $(x,y)$.
\end{proof}

\begin{lemma} \label{lem:rank3}
Let $F$ be a fundamental matrix and let $e_2 \in {\rm ker}(F^\top) \setminus\{0\}$. 
Define $P := \begin{pmatrix} [e_2]_\times F & e_2 \end{pmatrix}$.
Then $P$ has rank three. 
\end{lemma}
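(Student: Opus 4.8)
The plan is to show that the $3\times 4$ matrix $P = \begin{pmatrix} [e_2]_\times F & e_2 \end{pmatrix}$ has full row rank by showing that its three rows are linearly independent, or equivalently that no nonzero row vector $v^\top \in \RR^{3}$ annihilates $P$. Suppose $v^\top P = 0$; this splits into two conditions, namely $v^\top [e_2]_\times F = 0$ and $v^\top e_2 = 0$. The goal is to deduce $v = 0$ from these, using that $F$ has rank two and that $e_2$ spans the left kernel of $F$.

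First I would analyze $v^\top [e_2]_\times F = 0$. Writing $u^\top := v^\top [e_2]_\times$, this says $u^\top F = 0$, so $u$ lies in the left kernel of $F$, which is one-dimensional and spanned by $e_2$. Hence $[e_2]_\times^\top v = u = \lambda e_2$ for some scalar $\lambda$, i.e. $-[e_2]_\times v = \lambda e_2$ (using skew-symmetry $[e_2]_\times^\top = -[e_2]_\times$), which reads $e_2 \times v = -\lambda e_2$ after recalling $[e_2]_\times v = e_2 \times v$. Now the left-hand side $e_2 \times v$ is orthogonal to $e_2$, while the right-hand side is a multiple of $e_2$; since $e_2 \neq 0$, taking the inner product with $e_2$ forces $\lambda = 0$, hence $e_2 \times v = 0$. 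This means $v$ is parallel to $e_2$, say $v = \mu e_2$.

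Finally I would use the second condition $v^\top e_2 = 0$. Substituting $v = \mu e_2$ gives $\mu\, e_2^\top e_2 = \mu \|e_2\|^2 = 0$, and since $e_2 \neq 0$ this forces $\mu = 0$, so $v = 0$. This contradicts $v \neq 0$, so no nonzero $v$ annihilates $P$ from the left, and therefore $\rank(P) = 3$.

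I expect the only mildly delicate point to be the bookkeeping with transposes and the identity $[e_2]_\times^\top = -[e_2]_\times$, together with the observation that $e_2 \times v \perp e_2$ which is what kills the scalar $\lambda$; everything else is routine linear algebra using the rank-two hypothesis on $F$ and $e_2 \ne 0$. No genuine obstacle is anticipated.
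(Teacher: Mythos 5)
Your proof is correct, but it takes a different route from the paper. The paper argues via the column space: it asserts that $\rank([e_2]_\times F)=2$, observes that the column space of $[e_2]_\times F$ is therefore a plane in $\RR^3$, and notes that $e_2$ is orthogonal to every vector in that plane (each column being a cross product with $e_2$), so appending the column $e_2$ raises the rank to three. You instead work with the left null space of $P$: from $v^\top[e_2]_\times F=0$ you use the one-dimensionality of $\ker(F^\top)=\linspan\{e_2\}$ and skew-symmetry to get $e_2\times v=-\lambda e_2$, kill $\lambda$ by orthogonality of $e_2\times v$ to $e_2$, conclude $v\in\linspan\{e_2\}$, and then the condition $v^\top e_2=0$ forces $v=0$. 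The two arguments are dual in flavor, but yours has the advantage of being fully self-contained: it never needs the intermediate claim $\rank([e_2]_\times F)=2$, which the paper states without justification (it follows because the column space of $F$ is exactly $e_2^\perp$, which meets $\ker([e_2]_\times)=\linspan\{e_2\}$ only in $0$). The paper's version, in exchange, is shorter and more geometric once that rank fact is granted. Both are sound; your transpose bookkeeping with $[e_2]_\times^\top=-[e_2]_\times$ is handled correctly.
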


\begin{proof}
The proof can be found in  \cite[page 256]{hartley-zisserman-2003}, but we rewrite it here for the self-containedness of this note. 
Since $e_2\in{\rm ker}(F^\top) \setminus\{0\}$, we have $\rank([e_2]_\times F)=2$. It implies that the column space of $[e_2]_\times F$ is a plane in $\R^3$. Since $e_2$ is a nonzero vector orthogonal to any vector in this plane, we know $\rank(P)=3$. 
\end{proof}

We are now ready to prove the main theorem.

\begin{theorem}\label{thm:equivalence} 
Let $(x_i,y_i) \in \RR^2 \times \RR^2$, $i = 1,\ldots,m$ be given. 
Then the  following statements are equivalent:
\begin{enumerate}
\item \label{3dScene}
There exists a finite reconstruction of $\{(x_i,y_i)\}$
where one of the cameras is $\begin{pmatrix} I & 0 \end{pmatrix}$ and the two cameras are non-coincident.
\item \label{image}
There is a fundamental matrix $F$ associated to  $\{(x_i,y_i)\}$
 such that $(x_i,y_i)$ is $([e_2]_\times F, e_2)$-regular for all $i$, where 
$e_2 \in {\rm ker}(F^\top ) \setminus\{0\}$.

\end{enumerate}
\end{theorem}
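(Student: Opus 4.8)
The plan is to prove the two implications separately, using the structural reductions from Section~\ref{s3} and the two lemmas immediately preceding the theorem.

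For \eqref{3dScene} $\Rightarrow$ \eqref{image}: starting from a finite reconstruction $(P_1,P_2,\{\wh{w}_i\})$ with $P_1 = \begin{pmatrix} I & 0 \end{pmatrix}$ and $P_2 = \begin{pmatrix} A & b \end{pmatrix}$ non-coincident, Theorem~\ref{thm:fromSceneToEpipolar} already hands us a fundamental matrix; the natural choice is $F := [b]_\times A$, which has rank two since $b \neq 0$ (non-coincidence forces $b \neq 0$ because $A$ is nonsingular). The discussion after Definition~\ref{def:irregular} identifies $b$ as a valid choice of $e_2 \in \ker(F^\top)\setminus\{0\}$, and then $[e_2]_\times F = [b]_\times [b]_\times A$. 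So I need to check that $(x_i,y_i)$ is $([b]_\times F, b)$-regular for each $i$. The key observation is the contrapositive of Lemma~\ref{lem:regularity necessary for FPR}: since a reconstruction of $(x_i,y_i)$ via $P_1,P_2$ does exist, no pair can be $(A,b)$-irregular. The remaining wrinkle is that Definition~\ref{def:irregular} is phrased with the matrix $[b]_\times F = [b]_\times [b]_\times A$ rather than $F = [b]_\times A$ itself, so I must verify that $(A,b)$-regularity coincides with $([b]_\times F, b)$-regularity; both conditions are governed by whether $\wh{x}_i$ is an epipole of $P_1$ and whether $\wh{y}_i$ is an epipole of $P_2$, and since $[b]_\times F$ and $F$ have the same right kernel (both rank two with $[b]_\times$ injective on the column space of $F$, cf.\ Lemma~\ref{lem:rank3}) and $b$ spans the left kernel in both cases, the two regularity notions agree. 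This is the routine bookkeeping step.

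For \eqref{image} $\Rightarrow$ \eqref{3dScene}: given such an $F$ and $e_2 \in \ker(F^\top)\setminus\{0\}$, set $P_1 := \begin{pmatrix} I & 0 \end{pmatrix}$ and $P_2 := \begin{pmatrix} [e_2]_\times F & e_2 \end{pmatrix}$. By Lemma~\ref{lem:rank3}, $P_2$ has rank three, and since $e_2 \neq 0$ the camera $P_2$ is non-coincident with $P_1$ (its center is not $\wh{0}$). One must also check $P_2$ is a \emph{finite} camera, i.e.\ that $A := [e_2]_\times F$ is nonsingular — but it is not: $\rank([e_2]_\times F) = 2$. This is the subtle point, and it is exactly why Theorem~\ref{thm:equivReconstruction} and its non-coincident variant (Theorem~\ref{thm:non-coincident PR = non-coincident FPR}) are needed: I only need to produce \emph{some} reconstruction with non-coincident cameras, and then invoke those theorems to upgrade it to a finite reconstruction with first camera $\begin{pmatrix} I & 0 \end{pmatrix}$. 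So the real work is to show that for each $i$ there is $w_i \in \PP^3$ with $P_1 w_i \sim \wh{x}_i$ and $P_2 w_i \sim \wh{y}_i$. The standard fact $\wh{y}_i^\top F \wh{x}_i = 0$ should translate into $\wh{y}_i^\top [e_2]_\times [e_2]_\times F \wh{x}_i = 0$ type relations giving linear dependence of $\wh{y}_i$, $e_2$, and $([e_2]_\times F)\wh{x}_i$ — this is the hypothesis needed to apply Lemma~\ref{lem:regular implies reconstruction} with $A = [e_2]_\times F$ and $b = e_2$, whose output is precisely a reconstruction $w_i$ of $(x_i,y_i)$. The $([e_2]_\times F, e_2)$-regularity hypothesis in \eqref{image} is exactly the regularity hypothesis that Lemma~\ref{lem:regular implies reconstruction} requires.

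The main obstacle I anticipate is reconciling the ``finite camera'' hypotheses baked into Lemma~\ref{lem:regular implies reconstruction} and Lemma~\ref{lem:regularity necessary for FPR} with the fact that $P_2 = \begin{pmatrix} [e_2]_\times F & e_2 \end{pmatrix}$ is \emph{not} finite (its $3\times 3$ block has rank two). Resolving this cleanly requires either (a) first applying a homography to move to a finite $P_2$ before invoking the lemmas, or (b) checking that the proofs of those lemmas actually only use non-coincidence and $b \neq 0$, not genuine finiteness — reading Lemma~\ref{lem:regular implies reconstruction}'s proof, it uses only $b \neq 0$ and the linear-dependence relation, so option (b) is available but should be stated carefully. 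After the reconstruction $w_i$ is built for every $i$, Theorem~\ref{thm:non-coincident PR = non-coincident FPR} finishes the job by producing the required finite, non-coincident reconstruction with first camera $\begin{pmatrix} I & 0 \end{pmatrix}$, completing the equivalence.
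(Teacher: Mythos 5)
Your proposal is correct and follows essentially the same route as the paper: the forward direction via Theorem \ref{thm:fromSceneToEpipolar} and the contrapositive of Lemma \ref{lem:regularity necessary for FPR} with $F=[b]_\times A$, $e_2=b$ (plus the routine check that $(A,b)$-regularity agrees with $([b]_\times F,b)$-regularity), and the backward direction via Lemma \ref{lem:rank3}, Lemma \ref{lem:regular implies reconstruction}, and the reduction theorems of Section \ref{s3}. The only divergence is how the non-finiteness of $\begin{pmatrix} [e_2]_\times F & e_2 \end{pmatrix}$ is handled: the paper takes your option (a), choosing $a$ as in Lemma \ref{lem:infToFin} so that $A := [e_2]_\times F - e_2 a^\top$ is nonsingular and noting $[e_2]_\times A = -(e_2^\top e_2)F$, while your option (b) is also viable since the proof of Lemma \ref{lem:regular implies reconstruction} indeed uses only $b \neq 0$, regularity, and the linear-dependence relation, after which Theorem \ref{thm:non-coincident PR = non-coincident FPR} finishes as you say.
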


\begin{proof}
First we show \eqref{image} $\Rightarrow$ \eqref{3dScene}. 
Let the matrix $F$ stated in \eqref{image}  be given. 
Notice that $\begin{pmatrix} [e_2]_\times F & e_2 \end{pmatrix}$ is a camera matrix by 
Lemma  \ref{lem:rank3}. Then, 
take $a\in \RR^3$ so that $A:= [e_2]_\times F- e_2 a^\top$ is nonsingular and 
$P_1 := \begin{pmatrix} I & 0 \end{pmatrix}$ and $P_2 := \begin{pmatrix} A & e_2 \end{pmatrix}$ are non-coincident finite cameras; see the proof of Lemma \ref{lem:infToFin} for how $a$ is chosen. 
As $[e_2]_\times A = - e_2^\top e_2 F$, one has $y_i^\top [e_2]_\times A x_i = 0$ for all $i$. 
Then  \eqref{3dScene} holds by Theorem \ref{thm:equivReconstruction} and 
Lemma  \ref{lem:regular implies reconstruction}.

Next we show the converse. Assume \eqref{3dScene} holds. 
Then there is a finite camera $P_2$ so that $P_1 :=\begin{pmatrix} I & 0 \end{pmatrix}$, $P_2$ are non-coincident cameras,  and 
there are $w_i \in \RR^3$ ($1\leq  i \leq m$) such that 
 $(P_1,P_2,\{\wh{w}_i\})$ is a reconstruction of $\{(x_i,y_i)\}$ .
We let $P_2 := \begin{pmatrix} A & b \end{pmatrix}$ where $A\in \RR^{3\times 3}$ is nonsingular and $b\in \RR^3\setminus\{0\}$. 
Consider the fundamental matrix $F:= [b]_\times A$.
By Theorem \ref{thm:fromSceneToEpipolar} and Lemma \ref{lem:regularity necessary for FPR}, 
the epipolar constraints are satisfied and each $(x_i,y_i)$ is $(A,b)$-regular.
Since $F^\top = -A^\top [b]_\times$, we have $b\in {\rm ker}(F^\top)\setminus\{0\}$.
Moreover, as $[b]_\times F = ([b]_\times)^2 A$, we know each $(x_i,y_i)$ is $([b]_\times F, b)$-regular. 
 Thus the statement \eqref{image} follows. 
\end{proof}

\bibliographystyle{plain}
\bibliography{lee}

\end{document}